\DeclareMathOperator*{\argmax}{arg\,max}
\newcommand{\lmid}{\,\middle\vert\,}
\newcommand{\quadXquad}[1]{\quad \text{#1} \quad}
\newcommand{\quadstquad}{\quadXquad{s.t.}}
\newcommand{\tuple}[1]{\langle\, #1 \,\rangle}
\newcommand{\set}[1]{\{#1\}}
\newcommand{\calA}{\mathcal{A}}
\newcommand{\calC}{\mathcal{C}}
\newcommand{\calF}{\mathcal{F}}
\newcommand{\calM}{\mathcal{M}}
\newcommand{\calP}{\mathcal{P}}
\newcommand{\calS}{\mathcal{S}}
\newcommand{\calSxA}{\calS \times \calA}
\newcommand{\mbE}{\mathbb{E}}
\newcommand{\mbI}{\mathbb{I}}
\newcommand{\mbP}{\mathbb{P}}
\newcommand{\mbR}{\mathbb{R}}
\newcommand{\mbZ}{\mathbb{Z}}
\newcommand{\sqbla}[1]{\left[\, #1 \,\right]}
\newcommand{\Epi}[1]{\mbE_\pi\sqbla{#1}}
\newcommand{\Erho}[1]{\mbE_{s \sim \rho}\sqbla{#1}}
\newcommand{\Ppi}[1]{\mbP_\pi\sqbla{#1}}
\newcommand{\Ppionly}{\mbP_\pi}
\newcommand{\Epionly}{\mbE_\pi}
\newcommand{\cmdp}{\calM \cup \calC}
\newcommand{\red}[1]{\textcolor{red}{#1}}
\newcommand{\blue}[1]{\textcolor{blue}{#1}}
\newcommand{\iomgnospace}{IoMG-SafeRL}
\newcommand{\iomg}{\iomgnospace\space}
\newtheorem{theorem}{Theorem}
\newtheorem{lemma}{Lemma}
\theoremstyle{definition}
\newtheorem{definition}{Definition}
\newtheorem{problem}{Problem}
\newtheorem{remark}{Remark}
\title{A Survey of Constraint Formulations in Safe Reinforcement Learning}
\author{
Akifumi Wachi$^1$
\and
Xun Shen$^2$\and
Yanan Sui$^3$
\affiliations
$^1$LY Corporation \\
$^2$Osaka University \\
$^3$Tsinghua University
\emails
akifumi.wachi@lycorp.co.jp,
shenxun@eei.eng.osaka-u.ac.jp,
ysui@tsinghua.edu.cn
}
\begin{document}

\maketitle

\begin{abstract}
Safety is critical when applying reinforcement learning (RL) to real-world problems.
As a result, safe RL has emerged as a fundamental and powerful paradigm for optimizing an agent's policy while incorporating notions of safety.
A prevalent safe RL approach is based on a constrained criterion, which seeks to maximize the expected cumulative reward subject to specific safety constraints.
Despite recent effort to enhance safety in RL, a systematic understanding of the field remains difficult. This challenge stems from the diversity of constraint representations and little exploration of their interrelations.
To bridge this knowledge gap, we present a comprehensive review of representative constraint formulations, along with a curated selection of algorithms designed specifically for each formulation.
In addition, we elucidate the theoretical underpinnings that reveal the mathematical mutual relations among common problem formulations.
We conclude with a discussion of the current state and future directions of safe reinforcement learning research.
\end{abstract}

\section{Introduction}
\label{sec:introduction}

Reinforcement learning (RL, \citep{sutton1998reinforcement}) is a powerful paradigm for solving sequential decision-making problems through evaluation and improvement, which has made significant progress in many fields such as games \citep{mnih2015human,silver2016mastering,wurman2022outracing}, robotics \citep{levine2016end}, data center cooling~\citep{li2019transforming}, finance~\citep{hambly2023recent}, recommendation systems \citep{afsar2022reinforcement}, and healthcare~\citep{yu2021reinforcement}.
Recently, particular attention has been paid to RL for its use in the fine-tuning of large language models (LLMs) under the name of reinforcement learning from human feedback (RLHF, \cite{ouyang2022training}).
RL is a general concept that can be employed in many domains and has been spreading to various disciplines as a useful paradigm.

When applying RL to real-world problems, \textit{safety} is an essential requirement \citep{amodei2016concrete}.
Thus, \textit{safe RL} has been actively studied in recent years so that the benefits of RL are realized while minimizing negative safety issues.
Promising areas for safe RL applications include robotics~\citep{pham2018optlayer}, autonomous driving~\citep{shalev2016safe}, healthcare~\citep{jia2020safe}, and many others. 
An emerging and crucial use case of safe RL involves refining LLMs using RLHF to align with human preferences.
Specifically, it is critical to prevent harmfulness or bias (e.g., toxicity, discrimination) while maintaining the helpfulness of the generated sentences.
For example, Safe RLHF~\citep{dai2023safe} is a representative approach to balance such a tradeoff.
\textit{Safe RL} is an active area of research in artificial intelligence, with extensive theoretical and practical investigations aimed at developing RL systems that exhibit safe and reliable behavior.

Safe RL is inherently a broad concept with different formulations for the different aspects of real-world safety-critical problems.
\citet{garcia2015comprehensive} provided an eminent survey of safe RL and categorized its optimization criteria into four groups: 1) constrained criteron~\citep{geibel2006reinforcement}, 2) worst-case criteron~\citep{heger1994consideration}, 3)
risk-sensitive criteron~\citep{borkar2002q}, and 4) others (e.g., \textit{r-squared, value-at-risk}).
This paper focuses on the constrained criterion of safe RL as a growing and powerful group of methods to optimize policies under safety constraints.

A noteworthy aspect of safe reinforcement learning research based on constrained optimization criteria is the existence of diverse representations for safety constraints, with little analysis the interrelationships and theoretical connections among these various formulations.
Safe RL research in the last decade has focused on developing new algorithms, i.e. \textit{how to solve the problem} while pursuing performance. 
New algorithms have continuously been developed under various formulations, thereby making it increasingly challenging to stay abreast of advancements in the field.
There are several recent survey papers on safe RL \citep{kim2020safe,brunke2021safe,liu2021policy,gu2022review,zhao2023state_survey}, but they also focus on methods rather than formulations.
As formulation represents the initial phase in comprehending safe RL or implementing algorithms in practical scenarios, it becomes imperative for the community to comprehensively survey the existing literature and lay the groundwork for acquiring a systematic comprehension.

\paragraph{Our contributions.}

This paper provides a comprehensive survey focusing on constraint \textit{formulations} in safe reinforcement learning and introduces representative algorithms for each formulation.
Furthermore, we discuss the relationships between various constraint formulations by defining three theoretical notions: \textit{transformability}, \textit{generalizability},  and \textit{conservative approximation}.
Specifically, we present theoretical results demonstrating that there exist two problems, termed \textbf{Identical or More General Safe RL (\iomgnospace)} problems, into which other common problems can be either transformed or conservatively approximated.
The main contribution of this paper is to bridge the gaps between the safe RL problems with appropriate algorithms by organizing existing research with a focus on constraint formulation.

\section{Preliminaries}
\label{sec:preliminary}

We consider safe RL problems modeled using a constrained Markov decision process (CMDP, \cite{altman1999constrained}), which is formally represented as
\begin{align}
    \label{eq:sec2_cmdp}
    \cmdp \coloneqq \underbrace{\tuple{\calS, \calA, \calP, H, r, \gamma_r, \rho}}_{\text{Standard MDP}\,(\calM)} \cup \ \mathcal{C},
\end{align}
where $\calS \coloneqq \set{s}$ is a state space, $\calA \coloneqq \set{a}$ is an action space, and $\calP: \calSxA \rightarrow \Delta(\calS)$ is the state transition, where $\calP(s' \,| \,s, a)$ is the probability of transition from state $s$ to state $s'$ when action $a$ is taken.
Note that $\Delta(X)$ denotes the probability simplex over the set $X$.
In addition, $H \in \mbZ_{+}$ is the (fixed) finite length of each episode, $r: \calSxA \rightarrow [0, 1]$ is the reward function, $\gamma_r \in [0, 1)$ is the discount factor for the reward, and $\rho \in \Delta(\calS)$ is the initial state distribution.
A key difference from a standard MDP $\calM$ lies in the existence of an additional tuple $\calC$ to represent safety constraints, which will be referred to as a ``constraint tuple'' in the rest of this paper.\footnote{Though this paper considers finite-horizon discounted CMDPs, key ideas can be extended to infinite and/or undiscounted cases.} 

A policy $\pi: \calS \rightarrow \Delta(\calA)$ is a function to map from states to distribution over actions.
Let $\Pi$ denote a policy class.
Given a policy $\pi \in \Pi$, the value function is defined as
\begin{align} \label{eq:value_s}
    V^\pi_{r, h}(s) \coloneqq &~\Epi{\sum_{h'=h}^H \gamma_r^{h'} r(s_{h'}, a_{h'}) \lmid s_h = s},
\end{align}
where the expectation $\mbE_{\pi}$ is taken over the random state-action sequence $\{(s_{h'}, a_{h'})\}_{h'=h}^H$ induced by the policy $\pi$ and the CMDP $\cmdp$.
Since the initial state $s_0$ is sampled from $\rho$, we slightly abuse the notation and define
\begin{align} \label{eq:value_initial}
    V^\pi_r(\rho) \coloneqq \Erho{V_{r, 0}^\pi(s)}.
\end{align}

Crucially, this paper deals with safe RL involving the ``constrained'' policy optimization problem.
A policy must be within the feasible policy space $\widehat{\Pi} \subseteq \Pi$ that satisfies the safety constraint based on the given constraint tuple $\calC$.
Therefore, the optimal policy $\pi^\star: \calS \rightarrow \Delta(\calA)$ is defined as
\begin{align}
    \label{eq:sec2_safe_rl}
    \pi^\star \coloneqq \argmax_{\pi \in \widehat{\Pi}} V^\pi_r(\rho).
\end{align}

An overall sequence for solving safe RL problems based on a constrained criterion is illustrated in Figure~\ref{fig:sequence}, which consists of 1) problem formulation and 2) policy optimization.
An interesting yet complicated point for understanding safe RL research is that there are several formulations for representing the constrained tuple $\calC$ and the feasible policy space $\widehat{\Pi}$ depending on the problem settings or applications that researchers have in their minds.
In the next section, we will review seven common safety constraint representations that have been well-studied in safe RL literature.

\tikzstyle{b} = [draw, thick, fill=white, rectangle, minimum height=2em, text width=22em, text centered]

\begin{figure}[t]
\centering
\begin{tikzpicture}[auto, node distance=2cm,>=latex']
    \node [b] (p1) {
        \textbf{Step 1: Problem Formulation}
        \begin{equation*}
            \blue{\max_{\pi \in \Pi} V^\pi_r(\rho)} 
            \quad \text{subject to} \quad
            \red{\textbf{[Safety Constraint]}}
        \end{equation*}
        \begin{itemize}
            \item \blue{Typical RL objective}
            \item \red{Diverse \textbf{Safety Constraint} representations}
        \end{itemize}
        };
    \node [b, below of=p1, node distance=2.8cm] (p2) {
        \textbf{Step 2: Policy Optimization \vspace{0pt}}  \\
        \begin{itemize}
            \item Either use an existing algorithm suitable for the problem setup or develop a new algorithm
        \end{itemize}
    };
    \draw [->, thick] (p1) -- node[name=2to1] {} (p2);
\end{tikzpicture}
\caption{A typical sequence for solving safe RL problems based on constrained criteria. Due to the diversity of safety constraint representations and little discussion on their interrelations, it is not easy to understand safe RL research systematically. Unlike existing survey papers that focus on \textit{methods}, we aim to provide a comprehensive survey from the perspective of \textit{formulations} on safe RL.}
\label{fig:sequence}
\end{figure}
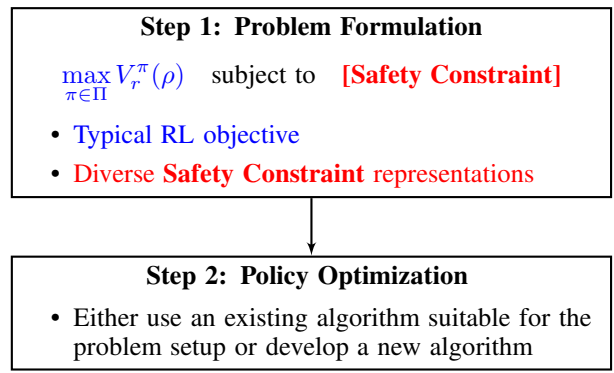
\section{Common Constraint Formulations}
\label{sec:common_formulations}

This paper deals with safe RL based on a constrained criterion.
As presented shortly, for a CMDP $\cmdp$, common formulations discussed in this paper can be written as
\begin{align}
    \label{eq:safe_rl_fc}
    \max_{\pi \in \Pi} V^\pi_r(\rho) \quadstquad f_{\calC}(\pi) \le 0.
\end{align}
In the rest of this paper, we call $f_{\calC}: \Pi \rightarrow \mbR$ a \textit{safety constraint function (SCF)} and let $\calF_{\calC} = \set{f_{\calC}}$ denote a class of SCFs.
Crucially, $\calC$ contains the parametric information of the safety constraint; hence, $\calF_{\calC}$ is regarded as a class of SCFs that can be represented with $\calC$.
Note that, though $f_\calC$ often depends on the parameters defined in the standard MDP $\calM$ (e.g., $\calP$), we omitted it from the notation for simplicity.
The feasible policy space $\widehat{\Pi}$ depends on the structure and parameters of $f_{\calC}$; hence is, we explicitly represent it as
\begin{align}
    \widehat{\Pi}(f_{\calC}) \coloneqq \set{\pi \in \Pi \mid f_{\calC}(\pi) \le 0}.
\end{align}

In the following subsections, we will introduce common safety constraint representations and associated algorithms while organizing based on the aforementioned notations.
Representative existing studies and algorithms are summarized according to the problem formulations in Table~\ref{tab:formulations}.

\subsection{Expected Cumulative Safety Constraint}

One of the most popular safe RL formulations is to represent the safety constraint using the same structure of the value function in terms of reward (i.e., $V_r^\pi(\rho)$).
As with the reward function, we first define the value function for safety 
in a state $s$ at time $h$, denoted as
\begin{align*}
    V_{c, h}^\pi(s) \coloneqq &~\Epi{\sum_{h'=h}^H \gamma_c^{h'} c(s_{h'}, a_{h'}) \lmid s_h = s}.
\end{align*}
where $c: \calSxA \rightarrow [0, 1]$ is a safety cost function.\footnote{Without loss of generality, we assume that the safety cost function $c$ is bounded between $0$ and $1$ for all $(s, a)$ as with the reward function $r$. Especially, the assumption that the safety cost is \textit{positive} is important for our theoretical analyses.}
We then take expectation with respect to the initial state distribution $\rho$; that is, for a policy $\pi \in \Pi$, we have 
\begin{align*}
    V^\pi_c(\rho) \coloneqq &~\Erho{V_{c, 0}^\pi(s)}.
\end{align*} 
Therefore, we define the following safe RL problem based on the value function regarding the safety cost function.
\begin{problem}
    \label{problem:expected_cumulative_safety}
    Let a constraint tuple be $\calC \coloneqq \tuple{c, \gamma_c, \xi}$, where $c: \calSxA \rightarrow [0, 1]$ is the safety cost function, $\gamma_c \in [0, 1)$ is the discount factor for the safety, and $\xi \in \mbR_+$ is the safety threshold.
    Then,
    \begin{align}
        \max_\pi V^\pi_r(\rho) \quadstquad V_c^\pi(\rho) \le \xi.
    \end{align}
\end{problem}

\begin{remark}
    In Problem~\ref{problem:expected_cumulative_safety}, the SCF $f_{\calC}: \Pi \rightarrow \mbR$ is defined as
    \begin{align*}
        f_{\calC}(\pi) \coloneqq V_c^\pi(\rho) - \xi.
    \end{align*}
\end{remark}

\noindent
A reason why Problem~\ref{problem:expected_cumulative_safety} is popular is that the safety constraint has a high affinity with the value function in terms of reward.
For example, when $\gamma_r = \gamma_c$, for any $\lambda \in \mbR$, we have
\begin{align*}
    V_{r + \lambda c}^\pi(\rho) = V_r^\pi(\rho) + \lambda \cdot V_c^\pi(\rho),
\end{align*}
which leads to so-called Lagrangian-based methods~\citep{altman1999constrained}.
Thus, since the theoretical properties of the problem have been studied, several algorithms with theoretical guarantees on optimality or safety have been proposed (e.g., \cite{ding2021provably}, \cite{bura2022dope}).
Practically, many well-known algorithms, such as constrained policy optimization (CPO, \cite{AchiamHTA17}) or reward-constrained policy optimization (RCPO, \cite{tessler2019reward}, are built upon Problem~\ref{problem:expected_cumulative_safety}.

\subsection{State Constraint}
Another popular formulation involves leveraging the ``state'' constraints so that an agent avoids visiting a set of unsafe states, which is well-suited for the case where an autonomous robot needs to behave safely in a hazardous environment (e.g., a disaster site).
This type of formulation has been widely adopted by previous studies on safe-critical robotics tasks \citep{thananjeyan2021recovery,thomas2021safe,turchetta2020safe,wang2023enforcing}, which is written as follows:

\begin{problem}
    \label{problem:state}
    Let a constraint tuple be $\calC \coloneqq \tuple{S_{\text{unsafe}}, \gamma_c, \xi}$, where $S_{\text{unsafe}} \subseteq \calS$ is the set of unsafe states, $\gamma_c \in [0, 1)$ is the discount factor for the safety cost function, and $\xi \in \mbR_+$ is the safety threshold.
    Then,
    \begin{align*}
        \max_\pi V^\pi_r(\rho) \quadstquad \Epi{\sum_{h=0}^H \gamma_c^h \mbI(s_h \in S_{\text{unsafe}})} \le \xi,
    \end{align*}
    where $\mbI(\cdot)$ is the indicator function.
\end{problem}

\begin{remark}
    In Problem~\ref{problem:state}, the SCF $f_{\calC}: \Pi \rightarrow \mbR$ is defined as
    \begin{align*}
        f_{\calC}(\pi) \coloneqq \Epi{\sum_{h=0}^H \gamma_c^h \mbI(s_h \in S_{\text{unsafe}})} - \xi.
    \end{align*}
\end{remark}

\subsection{Joint Chance Constraint}
Policy optimization under joint chance constraints has been studied especially in the field of control theory such as \citet{ono2015chance} and \citet{pfrommer2022safe}.
This type of safety-constrained policy optimization problem is typically formulated as follows:
\begin{problem}
    \label{problem:chance}
    Let a constraint tuple be $\calC \coloneqq \tuple{S_{\text{unsafe}}, \xi}$, where $S_{\text{unsafe}} \subseteq \calS$ is the set of unsafe states and $\xi \in \mbR_+$ is the safety threshold.
    Then,
    \begin{align*}
        \max_\pi V^\pi_r(\rho) \quadstquad \Ppi{\bigvee_{h=0}^{H} s_h \in S_\text{unsafe}} \le \xi,
    \end{align*}
    where $\mbP_\pi$ represents a probability that is computed over the random state-action sequences $\set{(s_{h'}, a_{h'})}_{h'=h}^H$ induced by the policy $\pi$ and the CMDP $\cmdp$.
\end{problem}

\begin{remark}
    In Problem~\ref{problem:chance}, the SCF $f_{\calC}: \Pi \rightarrow \mbR$ is defined as
    \begin{align*}
        f_{\calC}(\pi) \coloneqq \Ppi{\bigvee_{h=0}^{H} s_h \in S_\text{unsafe}} - \xi.
    \end{align*}
\end{remark}

\noindent
It is quite challenging to directly solve the Problem~\ref{problem:chance} characterized by joint chance-constraints; thus, most of the previous work does not directly deal with this type of constraint and uses some approximations or assumptions.
For example, \citet{pfrommer2022safe} assume a known linear time-invariant dynamics.
Also, \citet{ono2015chance} conservatively approximate the joint chance constraint into the constraint with an additive structure as in Problem~\ref{problem:state} using the following inequality (for more details, see the proof of Theorem~\ref{theorem:2}):
\begin{align*}
    \Ppi{\bigvee_{h=1}^{H} s_h \in S_\text{unsafe}}
    \le \Epi{\sum_{h=1}^H  \mathbb{I}(s_h \in S_\text{unsafe})}.
\end{align*}

\subsection{Expected Instantaneous Safety Constraint with Time-variant Threshold}
Though Problems~\ref{problem:expected_cumulative_safety}, \ref{problem:state}, and \ref{problem:chance} formulate the long-term safety constraint with additive structures or joint chance constraints, there are a few papers (e.g., \cite{pham2018optlayer}) that focus on ``instantaneous'' safety constraint.
In this problem, an agent is required to satisfy a safety constraint at every time step.

\begin{problem}
    \label{problem:expected_gse}
    Let a constraint tuple be $\calC \coloneqq \tuple{c, \{\xi_h\}_{h=0}^H}$, where $c: \calSxA \rightarrow [0, 1]$ is the safety cost function, and $\xi_h \in \mbR_+$ is the safety threshold for time step $h \in [H]$.
    Then,
    \begin{align*}
        \max_\pi V^\pi_r(\rho) \quadstquad \Epi{c(s_h, a_h)} \le \xi_h, \ \forall h \in [H].
    \end{align*}
\end{problem}

\begin{remark}
    In Problem~\ref{problem:expected_gse}, the SCF $f_{\calC}: \Pi \rightarrow \mbR$ is defined as
    \begin{align*}
        f_{\calC}(\pi) \coloneqq \max_{h \in [H]} \Bigl(\Epi{c(s_h, a_h)} - \xi_h \Bigr).
    \end{align*}
\end{remark}

\subsection{Almost Surely Cumulative Safety Constraint}

Unlike Problem~\ref{problem:expected_cumulative_safety} where the expectation $\mbE_\pi$ regarding the safety constraint is taken, we often want to guarantee safety \textit{almost surely} (i.e., probability of $1$).
This problem has been recently studied in \cite{sootla2022enhancing} and \cite{sootla2022saute}, which is based on a stricter safety notion.
In such cases, the safe RL problem is formulated as follows:
\begin{problem}
    \label{problem:almost_surely}
    Let a constraint tuple be $\calC \coloneqq \tuple{c, \gamma_c, \xi}$, where $c: \calSxA \rightarrow [0, 1]$ is the safety cost function, $\gamma_c \in [0, 1)$ is the discount factor for the safety, and $\xi \in \mbR_+$ is the safety threshold.
    Then,
    \begin{align*}
        \max_\pi V^\pi_r(\rho) \quadstquad \Ppi{\sum_{h=0}^H \gamma_c^h c(s_h, a_h) \le \xi} = 1.
    \end{align*}
\end{problem}

\begin{remark}
    In Problem~\ref{problem:almost_surely}, the SCF $f_{\calC}: \Pi \rightarrow \mbR$ is defined as
    \begin{align*}
        f_{\calC}(\pi) \coloneqq 1 - \Ppi{\sum_{h=0}^H \gamma_c^h c(s_h, a_h) \le \xi}.
    \end{align*}
\end{remark}

\subsection{Almost Surely Instantaneous Safety Constraint with Time-invariant Threshold}
Some existing studies formulate safe RL problems via an instantaneous constraint, attempting to ensure safe exploration \citep{sui2015safe} even during the learning stage while aiming for extremely safety-critical RL applications such as planetary exploration and medical treatment~\citep{turchetta2016safe,wachi2018safe,wachi2020safe,wang2023enforcing}.
Such studies require the agent to satisfy the following instantaneous safety constraint at every time step.

\begin{problem}
    \label{problem:instantaneous_fixed}
    Let a constraint tuple be $\calC \coloneqq \tuple{c, \xi}$, where $c: \calSxA \rightarrow [0, 1]$ is the safety cost function, and $\xi \in \mbR_+$ is the threshold.
    Then,
    \begin{align*}
        \max_\pi V^\pi_r(\rho) \quadstquad \Ppi{c(s_h, a_h) \le \xi} = 1, \ \forall h \in [H].
    \end{align*}
\end{problem}

\begin{remark}
    In Problem~\ref{problem:instantaneous_fixed}, the SCF $f_{\calC}: \Pi \rightarrow \mbR$ is defined as
    \begin{align*}
        f_{\calC}(\pi) \coloneqq 1 - \prod_{h=0}^H \Ppi{c(s_h, a_h) \le \xi}.
    \end{align*}
\end{remark}
\noindent
This formulation is also related to notions in the control theory, called control barrier functions \citep{ames2016control,cheng2019end} or Lyapunov functions \citep{berkenkamp2017safe}.
Note that while these functions are typically required to be positive, it is possible to use the same formulation in Problem~\ref{problem:instantaneous_fixed} by defining them as $-c$ and setting $\xi = 0$.

\subsection{Almost Surely Instantaneous Safety Constraint with Time-variant Threshold}
As a similar formulation to Problem~\ref{problem:instantaneous_fixed}, \cite{wachi2023safe}.
has recently introduced a problem called the generalized safe exploration (GSE) problem, which is written as follows:
\begin{problem}
    \label{problem:as_gse}
    Let a constraint tuple be $\calC \coloneqq \tuple{c, \{\xi_h\}_{h=0}^H}$, where $c: \calSxA \rightarrow [0, 1]$ is the safety cost function, and $\xi_h \in \mbR_+$ is the safety threshold for time step $h \in [H]$.
    Then,
    \begin{align*}
        \max_\pi V^\pi_r(\rho) \quadstquad \Ppi{c(s_h, a_h) \le \xi_h} = 1, \forall h \in [H].
    \end{align*}
\end{problem}

\begin{remark}
    In Problem~\ref{problem:as_gse}, the SCF $f_{\calC}: \Pi \rightarrow \mbR$ is defined as
    \begin{align*}
        f_{\calC}(\pi) \coloneqq 1 - \prod_{h=0}^H \Ppi{c(s_h, a_h) \le \xi_h}.
    \end{align*}
\end{remark}

\noindent
This formulation is quite similar to Problem~\ref{problem:instantaneous_fixed} with the only difference being that the safety threshold $\xi_h$ is time-variant.
An apparent benefit of Problem~\ref{problem:as_gse} compared to Problem \ref{problem:instantaneous_fixed} is that we can cover a wider range of applications such that the speed limit changes during driving.
Additionally, it goes beyond that and offers us more important advantages regarding the theoretical relations with Problem~\ref{problem:almost_surely}, which we will present shortly in Theorem~\ref{theorem:3}.

\subsection{Other Constrained Formulations}

In the line of safe RL work based on constraints, various attempts were left unexplored and not integrated into our theoretical framework within this paper. 

A notable instance of such problem formulations defines a safety constraint using the \textit{variance} of the return~\citep{tamar2012policy} which is represented as %
\begin{equation*}
    \max_\pi V^\pi_r(\rho) \quadstquad \textrm{Var}\sqbla{V_r^\pi(\rho)} < \xi.
\end{equation*}
The \textit{variance} of the return is related to the Sharpe ratio~\citep{sharpe1966mutual} – the ratio between the expected profit and its standard deviation.
Thus, this formulation is particularly useful in financial appliactions~\citep{meng2019reinforcement}.

Also, conventional RL algorithms depend on \textit{ergodicity} assumption; that is, any state $s$ is eventually reachable from any other state $s'$ by following a suitable policy.
This assumption does not hold in many practical applications since an agent cannot recover on its own after catastrophic actions.
\cite{moldovan2012safe} removed the ergodicity assumption and proposed an algorithm in which an agent is required to guarantee returnability to the initial safe state.

While this paper has dealt with \textit{numerical} safety constraints, there have been attempts to represent them using a certain language.
For example, \cite{fulton2018safe} or \cite{hasanbeig2020cautious} represent the safety constraint via formal languages such as linear temporal logic.
This constraint representation is quite useful for leveraging human knowledge in RL, which leads to powerful solutions such as shielding methods~\citep{alshiekh2018safe,li2020robust}.
Finally, \cite{yang2021safe} uses natural language to represent safety constraints.
Natural language is one of the most powerful mediums yet friendly representations to the general public. 
Given the recent remarkable progress and utilities of LLMs, this approach would be promising for applying safe RL to real-world AI systems.
\begin{table*}[t!]
    \centering
    \begin{small}
    \begin{tabular}{ccllccc}
        \toprule
        \multirow{2}{*}[-1mm]{Problem} & \multirow{2}{*}[-1mm]{Type} & \multirow{2}{*}[-1mm]{Representative Work} & \multirow{2}{*}[-1mm]{Algorithm} & \multicolumn{2}{c}{Theoretical Guarantee} & \multirow{2}{*}[-1mm]{Open Source Software (OSS)} \\
        \cmidrule(l){5-6}
        & & & & Optimality & Safety & \\
        \midrule
        \multirow{21}{*}{Problem~\ref{problem:expected_cumulative_safety}} 
            & \multirow{18}{*}{Online} & \cite{AchiamHTA17} & CPO & $-$ & $-$ & A, SSA, FSRL, SafePO, OmniSafe \\
            & & \multirow{2}{*}{\cite{ray2019benchmarking}} & TRPO-Lagrangian & $-$ & $-$ & A, SSA, FSRL, SafePO, OmniSafe \\
            & & & PPO-Lagrangian & $-$ & $-$ & A, SSA, FSRL, SafePO, OmniSafe \\
            & & \cite{tessler2019reward} & RCPO & $-$ & $-$ & A, SafePO, OmniSafe \\
            & & \cite{liu2020ipo} & IPO & $-$ & $-$ & A, OmniSafe \\
            & & \cite{yang2020projection} & PCPO & $-$ & $-$ & A, SafePO, OmniSafe \\
            & & \cite{stooke2020responsive} & PID-Lagrangian & $-$ & $-$ & A, SafePO, OmniSafe \\
            & & \cite{zhang2020first} & FOCOPS & $-$ & $-$ & A, FSRL, SafePO, OmniSafe \\
            & & \cite{ding2020natural} & NPG-PD & Y & C & $-$ \\
            & & \cite{bharadhwaj2021conservative} & CSC & $-$ & $-$ & A \\
            & & \cite{ding2021provably} & OPDOP & Y & C & $-$ \\
            & & \cite{bai2021achieving} & CSPDA & Y & C & $-$ \\
            & & \cite{as2021constrained} & LAMBDA & $-$ & $-$ & A \\
            & & \cite{xu2021crpo} & CRPO & Y & C & OmniSafe \\
            & & \cite{yu2022towards} & SEditor & $-$ & $-$ & A \\
            & & \cite{bura2022dope} & DOPE & Y & T and C & $-$ \\
            & & \cite{liu2022constrained} & CVPO & Y & C & A, FSRL \\
            & & \cite{zhang2022penalized} & P3O & $-$ & $-$ & A, OmniSafe \\
            \cmidrule(l){2-7}
            & \multirow{5}{*}{Offline} & \cite{le2019batch} & CBPL & $-$ & T and C & A \\
            & & \cite{lee2021coptidice} & COptiDICE & $-$ & T & A, OSRL, OmniSafe \\
            & & \cite{wu2021offline} & CMOMDPs & Y & T and C & $-$ \\
            & & \cite{xu2022constraints} & CPQ & $-$ & T & A, OSRL \\
            & & \cite{liu2023constrained} & CDT & $-$ & T & A, OSRL \\
        \midrule
        \multirow{4}{*}{Problem~\ref{problem:state}}
            & \multirow{4}{*}{Online} & \cite{turchetta2020safe} & CISR & $-$ & $-$ & A \\
            & & \cite{thomas2021safe} & SMBPO & $-$ & C & A \\
            & & \cite{thananjeyan2021recovery} & Recovery RL & $-$ & $-$ & A \\
            & & \cite{wang2023enforcing} & $-$ & $-$ & T and C & A \\
        \midrule
        \multirow{4}{*}{Problem~\ref{problem:chance}}
            & \multirow{4}{*}{Online} & \cite{ono2015chance} & CCDP & $-$ & T and C & $-$ \\
            & & \cite{pfrommer2022safe} & $-$ & Y & T and C & $-$ \\
            & & \cite{mowbray2022safe} & $-$ & $-$ & T and C & A \\
            & & \cite{kordabad2022safe} & $-$ & $-$ & T and C & $-$ \\
        \midrule
        \multirow{4}{*}[-1mm]{Problem~\ref{problem:expected_gse}}
            & \multirow{3}{*}{Online} & \cite{pham2018optlayer} & OptLayer & $-$ & T and C & A \\
            & & \cite{amani2021safe} & SLUCB & Y & T and C & $-$ \\
            & & \cite{zhao2023state} & SCPO & Y & C & $-$ \\
            \cmidrule(l){2-7}
            & Offline & \cite{amani2022doubly} & Safe-DPVI & Y & T and C & $-$ \\
        \midrule
        \multirow{2}{*}{Problem~\ref{problem:almost_surely}}
            & \multirow{2}{*}{Online} & \cite{sootla2022saute} & Saut{\'e} RL & Y & C & A, SafePO, OmniSafe \\ 
            & & \cite{sootla2022enhancing} & Simmer RL & Y & C & A, SafePO, OmniSafe \\
        \midrule
        \multirow{8}{*}{Problem~\ref{problem:instantaneous_fixed}}
            & \multirow{8}{*}{Online} & \cite{turchetta2016safe} & SafeMDP & $-$ & T and C & A \\
            & & \cite{berkenkamp2017safe} & SMbRL & $-$ & T and C & A \\
            & & \cite{fisac2018general} & $-$ & $-$ & T and C & $-$ \\
            & & \cite{wachi2018safe} & SafeExpOpt-MDP & $-$ & T and C & A \\
            & & \cite{dalal2018safe} & SafeLayer & $-$ & T and C & A \\
            & & \cite{cheng2019end} & RL-CBF & $-$ & T and C & A \\
            & & \cite{wachi2020safe} & SNO-MDP & Y & T and C & A \\
            & & \cite{wang2023enforcing} & $-$ & $-$ & C & $-$ \\
        \midrule
        \multirow{2}{*}{Problem~\ref{problem:as_gse}}
            & \multirow{2}{*}{Online} & \cite{shi2023near} & LSVI-NEW & Y & T and C & $-$\\ 
            & & \cite{wachi2023safe} & MASE & Y & T and C & $-$ \\
        \bottomrule
    \end{tabular}
    \end{small}
    \caption{Common safe RL formulations based on the constrained criterion and associated representative work. Type indicates whether each safety RL is based on online or offline RL settings. In the Theoretical Guarantee column, \textbf{Y} indicates the (near-)optimality of the policy obtained by an algorithm. Also, \textbf{T} means that safety is guaranteed during training, and \textbf{C} means that safety is guaranteed after convergence. Note that offline algorithms are inherently safe during training since there is no interaction between the agent and the environment. In the OSS column, \textbf{A} means a public authors' implementation exists, and \textbf{SSA} is an abbreviation of the Safety Starter Agent repository~(\citet{ray2019benchmarking}, \url{https://github.com/openai/safety-starter-agents}). Also, \textbf{FSRL} (\citet{liu2023datasets}, \url{https://github.com/liuzuxin/FSRL}),  \textbf{OSRL} (\citet{liu2023datasets}, \url{https://github.com/liuzuxin/OSRL}), \textbf{SafePO} (\citet{omnisafe}, \url{https://github.com/PKU-Alignment/Safe-Policy-Optimization}), and \textbf{OmniSafe} (\citet{omnisafe}, \url{https://github.com/PKU-Alignment/omnisafe}) are recent and actively maintained repositories for online and offline safe RL, which will lead to the ease of the process of adopting safe RL algorithms.}
    \label{tab:formulations}
\end{table*}

\tikzstyle{block} = [draw, thick, fill=white, rectangle, minimum height=2em, text width=19em]

\newcommand{\mybullet}{\bullet\ }
\newcommand{\ptitle}[1]{\textbf{Problem #1} \vspace{1mm}}

\begin{figure*}[t]
\centering
\begin{tikzpicture}[auto, node distance=2cm,>=latex']
    \node [block] (p1) {
        \ptitle{1}($\Epionly$ + Cumulative) \\
        $\mybullet \calC = \tuple{c, \gamma_c, \xi}$ \\
        $\mybullet f_\calC(\pi) = V_c^\pi(\rho) - \xi$
        };
    \node [block, below of=p1, node distance=2.3cm] (p2) {
        \ptitle{2}($\Epionly$ + Cumulative) \\
        $\mybullet \calC = \tuple{S_\text{unsafe}, \gamma_c, \xi}$ \\
        $\mybullet f_\calC(\pi) = \Epi{\sum_{h=0}^H \gamma_c^h \mbI(s_h \in S_{\text{unsafe}})} - \xi$
        };
    \node [block, below of=p2, node distance=2.6cm] (p3) {
        \ptitle{3}(Joint Chance Constraint) \\
        $\mybullet \calC = \tuple{S_\text{unsafe}, \xi}$ \\
        $\mybullet f_{\calC}(\pi) \coloneqq \Ppi{\bigvee_{h=0}^{H} s_h \in S_\text{unsafe}} - \xi$
        };
    \node [block, right of=p2, node distance=10cm] (p4) {
        \ptitle{4}($\Epionly$ + Instantaneous) \\
        $\mybullet \calC = \tuple{c, \set{\xi_h}_{h=0}^H}$ \\
        $\mybullet f_{\calC}(\pi) \coloneqq \max_{h \in [H]} \Bigl(\Epi{c(s_h, a_h)} - \xi_h \Bigr)$
        };

    \draw [->, thick] (p2) -- node[name=2to1] {Transformable} (p1);
    \draw [->, thick, dotted] (p3) -- node[name=2to1] {Conservative Approximation} (p2);
    \draw [->, thick] (p1) -| node[pos=0.5] {} node [near end] {Transformable} (p4);
    \draw [->, thick] (p2) -- node[name=1to7] {Transformable} (p4);
    \draw [->, thick, dotted] (p3) -| node[pos=0.5] {} node [near end] {Conservative Approximation} (p4);
\end{tikzpicture}
\caption{Relations among common safe RL formulations based on $\mbE_\pi$ and the one with chance constraints.}
\label{fig:1237}
\end{figure*}
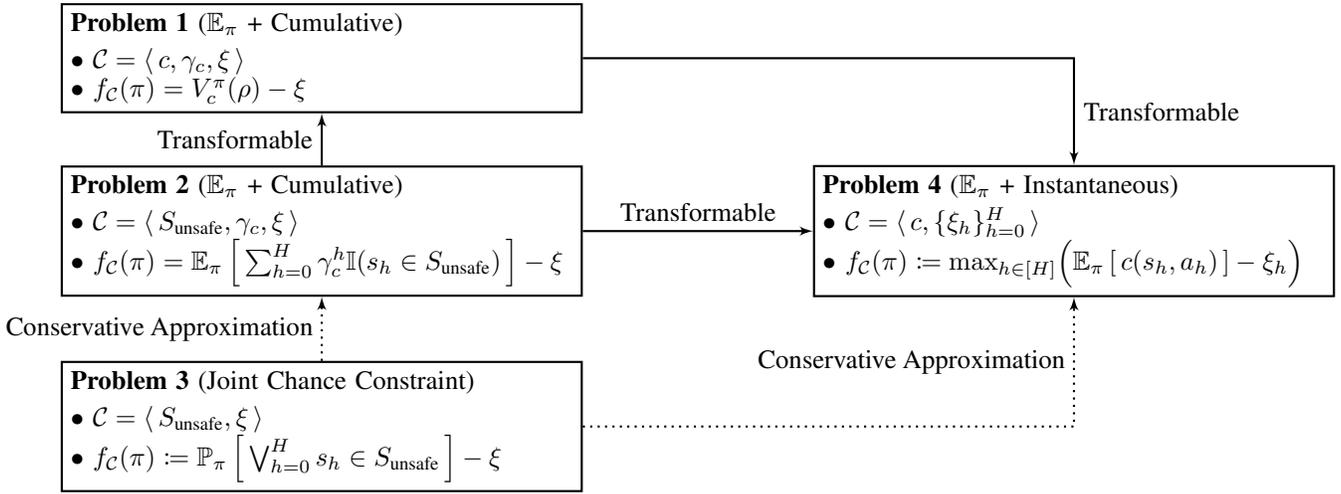

\begin{figure*}[t]
\centering
\hspace{20pt}
\begin{tikzpicture}[auto, node distance=2cm,>=latex']
    \node [block] (p5) {
        \ptitle{5}($\Ppionly$ + Cumulative) \\ 
        $\mybullet \calC = \tuple{c, \gamma_c, \xi}$ \\
        $\mybullet f_\calC(\pi) = 1 - \Ppi{\sum_{h=0}^H \gamma_c^h c(s_h, a_h) \le \xi}$
        };
    \node [block, below of=p5, node distance=2.2cm] (p6) {
        \ptitle{6}($\Ppionly$ + Instantaneous) \\
        $\mybullet \calC = \tuple{c, \xi}$ \\
        $\mybullet f_\calC(\pi) = 1 - \prod_{h=0}^H \Ppi{c(s_h, a_h) \le \xi}$
        };
    \node [block, right of=p5, node distance=10cm] (p7) {
        \ptitle{7}($\Ppionly$ + Instantaneous) \\
        $\mybullet \calC = \tuple{c, \{\xi_h\}_{h=0}^H}$ \\
        $\mybullet f_{\calC}(\pi) \coloneqq 1 - \prod_{h=0}^H \Ppi{c(s_h, a_h) \le \xi_h}$
        };
    \draw [->, thick] (p6) -| node[pos=0.5] {} node [near end] {Transformable} (p7);
    \draw [->, thick] (p5) -- node[] {Transformable} (p7);
\end{tikzpicture}
\caption{Relations among common safe RL formulations based on $\mbP_\pi$ (i.e., almost-surely constraints).}
\label{fig:456}
\end{figure*}
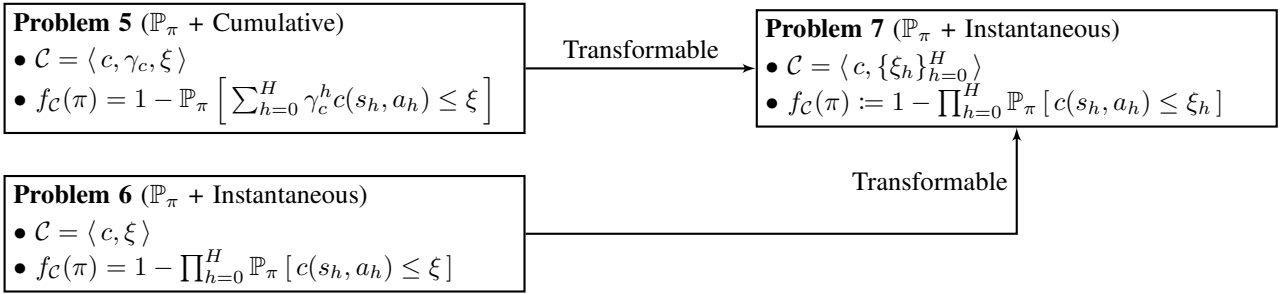
\section{Theoretical Relations Among Common Constraint Formulations of Safe RL}
\label{sec:relationship}

In this section, we provide theoretical understandings regarding the interrelations among common safe RL formulations presented in Section~\ref{sec:common_formulations}.

\subsection{Definitions}

We first introduce and define two important notions called \textit{transformability} and \textit{generalizability}.
Also, we define a notion called \textit{conservative approximation}.

\begin{definition}[Transformability]
    Let $\cmdp_1$ and $\cmdp_2$ denote two CMDPs that are respectively characterized by different constraint tuples $\calC_1$ and $\calC_2$.
    Let $\calF_{\calC_1}$ and $\calF_{\calC_2}$ denote two different classes of SCFs based on $\calC_1$ and $\calC_2$, respectively.
    For any SCF $f_{\calC_1} \in \calF_{\calC_1}$, if there exists $f_{\calC_2} \in \calF_{\calC_2}$ such that
    \begin{align*}
        \widehat{\Pi}(f_{\calC_1}) = \widehat{\Pi}(f_{\calC_2}),
    \end{align*}
    then we say that the problem characterized by $\cmdp_1$ can be transformed into that characterized by $\cmdp_2$.
\end{definition}

\begin{definition}[Generalizability]
    Let $N \in \mbZ_{+}$ denote a positive integer.
    Let $\set{\cmdp_i}_{i=1}^N$ denote a set of $N$ CMDPs that are respectively characterized by different constraint tuples $\calC_1, \calC_2, \ldots, \calC_N$.
    Suppose that, for all $i \in [N]$, the problem characterized by $\cmdp_i$ can be transformed into that by $\calM \cup \widetilde{\calC}$.
    Then, we call the problem characterized by $\calM \cup \widetilde{\calC}$ is an identical or more general safe RL (\iomgnospace) problem over the set of problems respectively characterized by $\cmdp_1, \cmdp_2, \ldots, \cmdp_N$.
\end{definition}

\begin{definition}[Conservative Approximation]
    Let $\cmdp_1$ and $\cmdp_2$ be two CMDPs that are respectively characterized by different constraint tuples $\calC_1$ and $\calC_2$.
    Let $\calF_{\calC_1}$ and $\calF_{\calC_2}$ respectively denote two classes of SCFs based on $\calC_1$ and $\calC_2$.
    For any SCF $f_{\calC_1} \in \calF_{\calC_1}$, if there exists $f_{\calC_2} \in \calF_{\calC_2}$ such that
    \begin{align*}
        \widehat{\Pi}(f_{\calC_1}) \supset \widehat{\Pi}(f_{\calC_2}),
    \end{align*}
    then we say that the problem characterized by $\cmdp_2$ is a conservative approximation of that characterized by $\cmdp_1$.
\end{definition}

\subsection{Preliminary Lemmas}

To present the main theoretical results, we first list three necessary lemmas as preliminaries.
The first lemma is based on \cite{sootla2022saute} and \cite{wachi2023safe}, which describes a theoretical connection between additive and instantaneous safety constraints.

\begin{lemma}
    \label{lemma:saute}
    Define a new variable $\eta_h$ meaning the remaining safety budget associated with the discount factor $\gamma_c$ such that
    \begin{equation}
    \label{eq:eta_1}
        \eta_h \coloneqq \gamma_c^{-h} \cdot \left(\xi - \sum_{h'=0}^{h-1} \gamma_c^{h'} c(s_{h'}, a_{h'}) \right), \ \forall h \in [H].
    \end{equation}
    Then, the following relation between additive and instantaneous constraints holds:
    \begin{alignat*}{2}
        \sum_{h'=0}^{h-1} \gamma_c^{h'} c(s_{h'}, a_{h'}) \le \xi 
        \iff c(s_h, a_h) \le \eta_h, \quad \forall h \in [H].
    \end{alignat*}
\end{lemma}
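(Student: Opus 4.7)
The plan is a short algebraic argument that unwinds the definition of $\eta_h$ in \eqref{eq:eta_1} and exhibits the claimed bi-implication directly. Since $\gamma_c \in [0, 1)$ is assumed (strictly) positive, $\gamma_c^h > 0$ for every $h \in [H]$, so multiplications and divisions by $\gamma_c^h$ preserve the direction of inequalities. The whole derivation will therefore be reversible and deliver ``$\iff$'' in a single chain rather than requiring two independent implications.

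Concretely, I would first multiply \eqref{eq:eta_1} through by $\gamma_c^h$ to obtain the rolling-budget identity
\begin{align*}
    \gamma_c^h \eta_h + \sum_{h'=0}^{h-1} \gamma_c^{h'} c(s_{h'}, a_{h'}) \;=\; \xi.
\end{align*}
For the direction $(\Leftarrow)$, starting from the instantaneous constraint $c(s_h, a_h) \le \eta_h$, multiplying by $\gamma_c^h > 0$ and substituting the identity yields $\sum_{h'=0}^{h} \gamma_c^{h'} c(s_{h'}, a_{h'}) \le \xi$, which is the cumulative constraint truncated at step $h$. For $(\Rightarrow)$, I would reverse the chain: isolate the $h$-th term in the cumulative inequality, subtract the partial sum, and divide by $\gamma_c^h > 0$ to recover $c(s_h, a_h) \le \eta_h$. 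Because this equivalence is per-step and refers only to the prefix trajectory $\{(s_{h'}, a_{h'})\}_{h' \le h}$, asserting it ``for all $h \in [H]$'' on both sides preserves it without any ordering subtlety.

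There is no real mathematical obstacle here; the lemma is fundamentally an algebraic restatement. The one bookkeeping point to watch is the upper limit of the partial sum appearing in the definition of $\eta_h$ (running through $h-1$) versus the cumulative sum obtained after absorbing the $h$-th term (running through $h$); keeping this offset consistent is what makes the additive and instantaneous forms line up. Notably, no property of the policy, the transition kernel, or the initial distribution is invoked, which is precisely what makes this lemma a clean, trajectory-wise plug-in for the later arguments that translate cumulative safety budgets into per-step instantaneous constraints (e.g., Theorem~\ref{theorem:3}).
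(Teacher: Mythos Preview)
Your per-step algebra is sound: unwinding \eqref{eq:eta_1} gives the identity $\gamma_c^{h}\eta_h+\sum_{h'=0}^{h-1}\gamma_c^{h'}c(s_{h'},a_{h'})=\xi$, and multiplying $c(s_h,a_h)\le\eta_h$ by $\gamma_c^{h}>0$ and substituting indeed yields the reversible equivalence
\begin{align*}
c(s_h,a_h)\le\eta_h \quad\Longleftrightarrow\quad \sum_{h'=0}^{h}\gamma_c^{h'}c(s_{h'},a_{h'})\le\xi.
\end{align*}
But notice the upper limit you obtain is $h$, not the $h-1$ appearing on the lemma's cumulative side: in budget language you have shown $c(s_h,a_h)\le\eta_h\iff\eta_{h+1}\ge 0$, which is shifted by one index from $\eta_h\ge 0$. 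You label this offset ``bookkeeping'', yet it is exactly the place where more than pure algebra is required.

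The paper does not argue per step. It records the recurrence $\eta_{h+1}=\gamma_c^{-1}\bigl(\eta_h-c(s_h,a_h)\bigr)$ with $\eta_0=\xi$ and then explicitly invokes the standing assumption $c(s,a)\ge 0$ to conclude that the \emph{single} terminal condition $\eta_{H+1}\ge 0$ is equivalent to $\eta_h-c(s_h,a_h)\ge 0$ for every $h$; combined with $\eta_{H+1}\ge 0\iff\sum_{h'=0}^{H}\gamma_c^{h'}c(s_{h'},a_{h'})\le\xi$, this is the form actually consumed in Lemmas~\ref{lemma:p1to4} and~\ref{lemma:p5to7}, where the cumulative side is one inequality over the full horizon. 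In your route the analogous missing link is
\begin{align*}
\Bigl[\,\forall h:\ \textstyle\sum_{h'=0}^{h}\gamma_c^{h'}c(s_{h'},a_{h'})\le\xi\,\Bigr]
\quad\Longleftrightarrow\quad
\textstyle\sum_{h'=0}^{H}\gamma_c^{h'}c(s_{h'},a_{h'})\le\xi,
\end{align*}
whose $(\Leftarrow)$ direction relies on the partial sums being nondecreasing in $h$, i.e., on $c\ge 0$. So your approach is viable and arguably more transparent than the recurrence, but it is not a property-free tautology: you must use nonnegativity of $c$ to close the index gap, rather than assuming the offset ``lines up'' on its own.
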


\begin{proof}
    By definition, the new variable $\eta_h$ satisfies the following recurrence formula:
    \begin{alignat}{2}
    \label{eq:eta_2}
        \eta_{h+1} = \gamma_c^{-1} \cdot \left(\, \eta_h - c(s_h, a_h) \, \right) \quad \text{with} \quad \eta_0 = \xi.
    \end{alignat}
  Combining \eqref{eq:eta_2} and $c(s, a) \ge 0, \forall (s, a) \in \calS \times \calA$, we have
  \begin{equation*}
    \eta_{H+1} \ge 0  \iff \eta_h - c(s_h, a_h) \ge 0, \quad \forall h \in [H].
  \end{equation*}
  By definition of $\eta_h$, we have
    \begin{equation}
        \sum_{h'=0}^{H} \gamma_c^{h'} c(s_{h'}, a_{h'}) \le \xi \iff \eta_{H+1} \ge 0.
    \end{equation}
    Therefore, we obtain the desired lemma.
\end{proof}

\begin{lemma}
    \label{lemma:p1to4}
    Problem~\ref{problem:expected_cumulative_safety} can be transformed into Problem~\ref{problem:expected_gse}.
\end{lemma}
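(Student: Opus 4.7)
The goal is, given a Problem~\ref{problem:expected_cumulative_safety} instance with $\calC_1 = \tuple{c, \gamma_c, \xi}$, to construct a Problem~\ref{problem:expected_gse} instance with $\calC_2 = \tuple{c', \set{\xi'_h}_{h=0}^H}$ whose SCF yields the same feasible policy space. A straightforward attempt that keeps the underlying MDP fixed and takes $c' = c$ does not work in general: Problem~\ref{problem:expected_cumulative_safety} imposes a single $\gamma_c$-weighted inequality that couples all time steps, whereas Problem~\ref{problem:expected_gse} imposes a conjunction of $H+1$ per-step expected inequalities, and the two feasible regions cannot generally coincide for any fixed $\{\xi'_h\}$ because the per-step expected costs $\Epi{c(s_h, a_h)}$ can trade off against one another along the horizon. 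The remedy I would use is a state-augmentation construction in which only one of the $H+1$ Problem~\ref{problem:expected_gse} constraints carries content, and that single binding constraint is engineered to coincide with the Problem~\ref{problem:expected_cumulative_safety} constraint.

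Concretely, I would augment $\calM$ with the time index $h$ and the running discounted cost $C_h \coloneqq \sum_{h'=0}^{h-1} \gamma_c^{h'} c(s_{h'}, a_{h'})$, so that the new state is $\tilde{s}_h = (s_h, h, C_h)$; the underlying transition on $s_h$ is inherited from $\calP$ while $h$ and $C_h$ evolve deterministically. Since $r(s, a)$ depends only on the original coordinates, $V^\pi_r(\rho)$ is unchanged under the lift. Define
\begin{align*}
c'(\tilde{s}, a) \coloneqq \begin{cases} 0, & h < H, \\ \bigl(C + \gamma_c^H c(s, a)\bigr)/L, & h = H, \end{cases}
\end{align*}
where $L = \sum_{h=0}^H \gamma_c^h$ normalizes $c'$ into $[0,1]$, and set $\xi'_h = 0$ for $h < H$ and $\xi'_H = \xi/L$.

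Under this choice, the per-step constraints at $h < H$ are trivially met since $c' \equiv 0$ there, while the remaining constraint at $h = H$ reduces, by linearity of expectation and the definition of $C_H$, to $\Epi{\sum_{h=0}^H \gamma_c^h c(s_h, a_h)} \le \xi$, i.e., exactly $V_c^\pi(\rho) \le \xi$. Hence $\widehat{\Pi}(f_{\calC_1}) = \widehat{\Pi}(f_{\calC_2})$, delivering the desired transformation. The main obstacle I anticipate is the bookkeeping around state augmentation: one must verify that policies on the augmented state space correspond to those on $\calS$ in a way that preserves both $V^\pi_r(\rho)$ and the constraint evaluation, which is routine here because $h$ and $C_h$ are deterministic functions of the observed history. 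Minor technicalities---ensuring $c' \in [0,1]$ via the normalizer $L$, and admitting $\xi'_h = 0$ in $\mbR_+$ (or replacing with an arbitrarily small positive perturbation if $\mbR_+$ excludes $0$)---are straightforward.
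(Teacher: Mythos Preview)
Your construction is correct, but it is a genuinely different route from the paper's. The paper invokes Lemma~\ref{lemma:saute} directly: from the pathwise budget identity it derives the expected-value equivalence
\[
V_c^\pi(\rho) \le \xi \iff \Epi{c(s_h,a_h)} \le \Epi{\eta_h}\quad \forall h\in[H],
\]
and then simply declares $\xi_h \coloneqq \Epi{\eta_h}$, concluding that Problem~\ref{problem:expected_cumulative_safety} is a special case of Problem~\ref{problem:expected_gse}. By contrast, you augment the state with $(h,C_h)$, zero out the cost for $h<H$, and collapse everything into a single terminal constraint at $h=H$.

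The tradeoffs are worth noting. The paper's argument is much shorter and leverages Lemma~\ref{lemma:saute} as intended, but its thresholds $\xi_h=\Epi{\eta_h}$ depend on $\pi$; strictly speaking, the tuple $\calC=\tuple{c,\{\xi_h\}}$ in Problem~\ref{problem:expected_gse} should be fixed in advance, so the paper is tacitly allowing a policy-dependent threshold schedule---exactly the concern you flagged in your first paragraph. Your route sidesteps that issue completely (your $\xi'_h$ are genuine constants), but the price is that you alter $\calM$ via state augmentation, whereas the paper's Definition of transformability keeps $\calM$ fixed and only varies $\calC$; you acknowledge this bookkeeping but should be explicit that it stretches the formal definition. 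In short: both arguments establish the intended equivalence, each bending a different part of the formalism; yours is more self-contained, the paper's is more aligned with the surrounding lemma structure.
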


\begin{proof}
    By applying Lemma~\ref{lemma:saute}, the safety constraint of Problem~\ref{problem:expected_cumulative_safety} satisfies
    \begin{equation*}
        V_c^\pi(\rho) \le \xi \iff   \Epi{c(s_h, a_h)} \le \Epi{\eta_h}, \ h \in [H]. 
    \end{equation*}
    Therefore, Problem~\ref{problem:expected_cumulative_safety} is identical to a special case of Problem~\ref{problem:expected_gse} with $\xi_h \coloneqq \mbE_\pi[\eta_h]$ for all $h \in [H]$.
    Hence, we obtain the desired lemma.
\end{proof}

\begin{lemma}
    \label{lemma:p5to7}
    Problem~\ref{problem:almost_surely} can be transformed into Problem~\ref{problem:as_gse}.
\end{lemma}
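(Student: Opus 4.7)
The plan is to mirror the proof of Lemma 2, replacing the expectation version of Lemma 1 with an almost-sure version. Lemma 1 already furnishes a pointwise equivalence between the cumulative budget constraint and a sequence of per-step constraints based on the running-budget variable $\eta_h$; this equivalence lifts cleanly to the probability-$1$ setting required by Problems 5 and 7.

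First, I would apply Lemma 1 sample-path-wise to obtain, on every trajectory induced by $\pi$,
\begin{equation*}
    \sum_{h=0}^H \gamma_c^h c(s_h, a_h) \le \xi \iff c(s_h, a_h) \le \eta_h, \ \forall h \in [H].
\end{equation*}
Passing this pointwise equivalence to its $\mbP_\pi$-almost-sure version, and then using the finiteness of $H$ to split the joint almost-sure inequality into per-step almost-sure inequalities (the backward direction is monotonicity of probability; the forward direction follows because the complement of the intersection is contained in the union of the $H+1$ individual complements, each of which has probability $0$), yields
\begin{equation*}
    \Ppi{\sum_{h=0}^H \gamma_c^h c(s_h, a_h) \le \xi} = 1 \iff \Ppi{c(s_h, a_h) \le \eta_h} = 1, \ \forall h \in [H].
\end{equation*}

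Finally, I would identify $\xi_h \coloneqq \eta_h$ in Problem 7, in the same spirit that Lemma 2 identifies $\xi_h \coloneqq \mbE_\pi[\eta_h]$. Under this identification, Problem 7's constraint $\Ppi{c(s_h, a_h) \le \xi_h} = 1$ coincides exactly with the per-step condition above, so the feasible policy set of Problem 5 equals that of the constructed instance of Problem 7, which is precisely the transformability claimed.

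The main obstacle is conceptual rather than technical: $\eta_h$ is a random variable depending on the past trajectory, whereas Problem 7 nominally designates $\xi_h$ as a scalar in $\mbR_+$. I would handle this exactly as in Lemma 2's proof, treating $\xi_h$ as a $\pi$-dependent quantity derived from $\eta_h$. Since the definition of transformability only requires that the feasible policy sets $\widehat{\Pi}(f_{\calC_1})$ and $\widehat{\Pi}(f_{\calC_2})$ coincide for each SCF, this policy-dependent identification is legitimate within the framework established in Section~\ref{sec:relationship}.
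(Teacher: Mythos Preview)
Your proposal is correct and follows essentially the same route as the paper: apply Lemma~\ref{lemma:saute} pathwise, lift the equivalence to the almost-sure statement, and set $\xi_h \coloneqq \eta_h$ to exhibit Problem~\ref{problem:almost_surely} as a special case of Problem~\ref{problem:as_gse}. In fact you are more careful than the paper---you spell out the union-bound step that justifies splitting the joint almost-sure event into per-step ones, and you explicitly flag the type mismatch (random $\eta_h$ versus scalar $\xi_h$) that the paper silently absorbs.
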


\begin{proof}
    By applying Lemma~\ref{lemma:saute}, we have
    \begin{alignat*}{2}
        &\Ppi{\sum_{h=0}^H \gamma_c^h c(s_h, a_h) \le \xi} = 1 \\
        & \iff
        \Ppi{c(s_h, a_h) \le \eta_h} = 1, \quad \forall h \in [H].
    \end{alignat*}
    Therefore, Problem~\ref{problem:almost_surely} is identical to a special case of Problem~\ref{problem:as_gse} with $\xi_h \coloneqq \eta_h$ for all $h \in [H]$.
    Therefore, we obtain the desired lemma.
\end{proof}

\subsection{The Two \iomg Problems}

We now provide three theorems on the interrelations among the common safe RL problems.
Crucially, we show that Problems~\ref{problem:expected_gse} and \ref{problem:as_gse} can be regarded as two \iomg problems of other problems.
We also show that Problem~\ref{problem:expected_gse} with $\gamma_c = 1$ is a conservative approximation of Problem~\ref{problem:chance}. 
Conceptual illustrations are given in Figures~\ref{fig:1237} and \ref{fig:456}.

\begin{theorem}
    Problem~\ref{problem:expected_gse} is an \iomg problem over Problems~\ref{problem:expected_cumulative_safety} and \ref{problem:state}.  
\end{theorem}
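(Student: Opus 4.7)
The plan is to leverage the IoMG definition directly: to prove that Problem~\ref{problem:expected_gse} is IoMG over Problems~\ref{problem:expected_cumulative_safety} and \ref{problem:state}, I need to show that each of these two source problems is transformable into Problem~\ref{problem:expected_gse}. Lemma~\ref{lemma:p1to4} already handles the transformation from Problem~\ref{problem:expected_cumulative_safety} to Problem~\ref{problem:expected_gse}, so the only remaining task is to handle Problem~\ref{problem:state}, and I will do this via transitivity by first reducing Problem~\ref{problem:state} to Problem~\ref{problem:expected_cumulative_safety}.

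First I would observe that the SCF of Problem~\ref{problem:state}, namely $f_{\calC}(\pi) = \Epi{\sum_{h=0}^H \gamma_c^h \mbI(s_h \in S_{\text{unsafe}})} - \xi$, is in fact of the same functional form as the SCF of Problem~\ref{problem:expected_cumulative_safety}: simply define the cost function
\begin{equation*}
    c(s, a) \coloneqq \mbI(s \in S_{\text{unsafe}}),
\end{equation*}
which takes values in $\{0, 1\} \subseteq [0, 1]$ and thus satisfies the required boundedness and positivity assumptions. Under this choice, $\Epi{\sum_{h=0}^H \gamma_c^h \mbI(s_h \in S_{\text{unsafe}})}$ equals $V_c^\pi(\rho)$, so the feasible policy space coincides exactly with that induced by an SCF of Problem~\ref{problem:expected_cumulative_safety}. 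This establishes that Problem~\ref{problem:state} is transformable into Problem~\ref{problem:expected_cumulative_safety}.

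Next I would invoke transitivity of transformability: for any $f_{\calC}$ of Problem~\ref{problem:state}, there is an equivalent $f_{\calC'}$ of Problem~\ref{problem:expected_cumulative_safety} (with the same feasible set), and by Lemma~\ref{lemma:p1to4} there is then an equivalent $f_{\calC''}$ of Problem~\ref{problem:expected_gse}. Hence $\widehat{\Pi}(f_{\calC}) = \widehat{\Pi}(f_{\calC''})$, so Problem~\ref{problem:state} is transformable into Problem~\ref{problem:expected_gse}. Combined with Lemma~\ref{lemma:p1to4}, both Problems~\ref{problem:expected_cumulative_safety} and \ref{problem:state} transform into Problem~\ref{problem:expected_gse}, which matches the definition of IoMG.

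I do not expect a serious obstacle here; the theorem is essentially a corollary of Lemma~\ref{lemma:p1to4} together with the observation that an indicator function is a legitimate bounded, non-negative cost. The only subtlety worth noting explicitly in the write-up is the verification that the transitivity chain preserves the equality of feasible policy spaces (so that the IoMG definition is met), and the fact that $\mbI(\cdot) \in \{0,1\}$ meets the domain conditions imposed on $c$ in Problem~\ref{problem:expected_cumulative_safety}.
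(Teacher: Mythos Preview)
Your proposal is correct and follows essentially the same route as the paper: invoke Lemma~\ref{lemma:p1to4} for Problem~\ref{problem:expected_cumulative_safety}, reduce Problem~\ref{problem:state} to Problem~\ref{problem:expected_cumulative_safety} by setting $c(s,a)\coloneqq\mbI(s\in S_{\text{unsafe}})$, and then chain the two transformations. The paper's proof is terser (it leaves the transitivity implicit), but the argument is the same.
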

\begin{proof}
    By Lemma~\ref{lemma:p1to4}, Problem~\ref{problem:expected_cumulative_safety} can be transformed into Problem~\ref{problem:expected_gse}.
    Also, Problem~\ref{problem:state} can be easily transformed into Problem~\ref{problem:expected_cumulative_safety} by defining $c(s, a) \coloneqq \mbI(s \in S_\text{unsafe})$ for all $(s, a) \in \calS \times \calA$.
    In summary, Problem~\ref{problem:expected_gse} is an \iomg problem over Problems~\ref{problem:expected_cumulative_safety} and \ref{problem:state}.
\end{proof}

\begin{theorem}
    \label{theorem:2}
    Problem~\ref{problem:expected_gse} with $\gamma_c = 1$ is a conservative approximation of Problem~\ref{problem:chance}.   
\end{theorem}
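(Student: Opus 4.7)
The plan is to construct, for any instance of Problem~\ref{problem:chance}, an instance of Problem~\ref{problem:expected_gse} (with $\gamma_c=1$) whose feasible policy set sits inside that of the original chance-constrained problem. Given a constraint tuple $\calC_1 = \tuple{S_\text{unsafe}, \xi}$ for Problem~\ref{problem:chance}, I would define $\calC_2 = \tuple{c, \{\xi_h\}_{h=0}^H}$ for Problem~\ref{problem:expected_gse} by setting $c(s,a) \coloneqq \mbI(s \in S_\text{unsafe})$ and allocating the joint budget equally across the horizon, e.g.\ $\xi_h \coloneqq \xi/(H+1)$ for every $h \in [H]$ (any nonnegative allocation with $\sum_h \xi_h \le \xi$ would work equally well).

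The core inequality is Boole's union bound already flagged by the authors in their discussion of~\citet{ono2015chance}:
\begin{align*}
    \Ppi{\bigvee_{h=0}^{H} s_h \in S_\text{unsafe}}
    \;\le\; \sum_{h=0}^{H}\Ppi{s_h \in S_\text{unsafe}}
    \;=\; \sum_{h=0}^{H} \Epi{c(s_h, a_h)},
\end{align*}
where the equality uses $c = \mbI(\,\cdot \in S_\text{unsafe})$. For any $\pi \in \widehat{\Pi}(f_{\calC_2})$, the per-step bounds $\Epi{c(s_h,a_h)} \le \xi_h$ combine with this inequality to yield $\Ppi{\bigvee_{h=0}^H s_h \in S_\text{unsafe}} \le \sum_{h=0}^H \xi_h \le \xi$, i.e.\ $\pi \in \widehat{\Pi}(f_{\calC_1})$. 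This gives $\widehat{\Pi}(f_{\calC_2}) \subseteq \widehat{\Pi}(f_{\calC_1})$.

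To upgrade the inclusion to the strict containment $\widehat{\Pi}(f_{\calC_1}) \supset \widehat{\Pi}(f_{\calC_2})$ required by the definition of conservative approximation, I would exhibit a simple witness policy that lies in the former but not the latter: e.g.\ a CMDP where some policy concentrates its entire unsafe-state visitation probability at a single time step $h^\star$ with mass between $\xi_{h^\star}$ and $\xi$; then Problem~\ref{problem:chance}'s constraint holds while Problem~\ref{problem:expected_gse}'s per-step constraint at $h^\star$ is violated.

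The only mild obstacle is that the union bound is loose, which is precisely what makes the approximation \emph{conservative} rather than exact; there is no transformation (in the sense of Definition~1) between these formulations, only one-way containment, and the allocation of $\xi$ across $\{\xi_h\}$ is a design choice that controls \emph{how} conservative the approximation is. Since Definition~3 only asks for the existence of \emph{some} $f_{\calC_2}$ producing a smaller feasible set, the equal-allocation construction above suffices.
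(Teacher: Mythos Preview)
Your argument is correct and rests on the same engine as the paper---Boole's inequality applied to $\Ppi{\bigvee_h s_h\in S_{\text{unsafe}}}$---but the routes diverge after that step. The paper stops at the \emph{cumulative} bound $\Epi{\sum_h \mbI(s_h\in S_{\text{unsafe}})}$, observes this is exactly the constraint of Problem~\ref{problem:state} with $\gamma_c=1$, and then (implicitly, via the transformability Problem~\ref{problem:state}$\to$Problem~\ref{problem:expected_cumulative_safety}$\to$Problem~\ref{problem:expected_gse} established in Theorem~1/Lemma~\ref{lemma:p1to4}) concludes for Problem~\ref{problem:expected_gse}. You instead bypass Problem~\ref{problem:state} entirely and land directly in Problem~\ref{problem:expected_gse} by splitting the global budget $\xi$ into fixed per-step budgets $\xi_h=\xi/(H+1)$. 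Your route is more self-contained: it produces an explicit $f_{\calC_2}\in\calF_{\calC_2}$ without invoking the earlier transformability chain, whose Lemma~\ref{lemma:p1to4} construction has the awkward feature that the thresholds $\xi_h=\Epi{\eta_h}$ depend on $\pi$. The paper's route, on the other hand, makes the intermediate role of Problem~\ref{problem:state} visible, which matches the picture in Figure~\ref{fig:1237}. You also attend to the \emph{strict} containment required by Definition~3 with a concrete witness, a point the paper's proof does not address explicitly.
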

\begin{proof}
    This lemma mostly follows from Theorem 1 in \citet{ono2015chance}.
    Regarding the constraint in Problem~\ref{problem:chance}, we have the following chain of equations:
    \begin{align*}
        \Ppi{\bigvee_{h=1}^{H} s_h \in S_\text{unsafe}}
        &\ \le \sum_{h=1}^H \Ppi{s_h \in S_\text{unsafe}}  \\
        &\ = \sum_{h=1}^H \Epi{\mathbb{I}(s_h \in S_\text{unsafe})} \\
        &\ = \Epi{\sum_{h=1}^H  \mathbb{I}(s_h \in S_\text{unsafe})}.
    \end{align*}
    In the first step, we used Boole's inequality (i.e., $\Pr[A \cup B] \le \Pr[A] + \Pr[B]$).
    The final term is the left-hand side of the constraint in Problem~\ref{problem:state} with $\gamma_c = 1$, which implies that Problem~\ref{problem:state} is a conservative approximation of Problem~\ref{problem:chance}.
    Therefore, we obtained the desired theorem.
\end{proof}

\begin{theorem}
\label{theorem:3}
    Problem~\ref{problem:as_gse} is an \iomg problem over Problems~\ref{problem:almost_surely} and \ref{problem:instantaneous_fixed}. 
\end{theorem}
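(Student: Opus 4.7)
The plan is to invoke Lemma~\ref{lemma:p5to7} for Problem~\ref{problem:almost_surely} and then handle Problem~\ref{problem:instantaneous_fixed} by a trivial specialization argument. Concretely, to establish that Problem~\ref{problem:as_gse} is an \iomg problem over the set $\{$Problem~\ref{problem:almost_surely}, Problem~\ref{problem:instantaneous_fixed}$\}$, by the definition of generalizability I only need to verify that each of these two problems can be transformed into Problem~\ref{problem:as_gse}.

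For Problem~\ref{problem:almost_surely}, I would simply cite Lemma~\ref{lemma:p5to7}: applying the change of variable $\eta_h$ from Lemma~\ref{lemma:saute} converts the cumulative almost-sure constraint into the product form of instantaneous almost-sure constraints $\mbP_\pi[c(s_h, a_h) \le \eta_h] = 1$ for all $h \in [H]$, which matches Problem~\ref{problem:as_gse} with the specific choice $\xi_h \coloneqq \eta_h$. Thus $\widehat{\Pi}(f_{\calC_5}) = \widehat{\Pi}(f_{\calC_7})$ for this choice, as required.

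For Problem~\ref{problem:instantaneous_fixed}, the transformation is immediate: given any SCF $f_{\calC}$ arising from the constraint tuple $\tuple{c, \xi}$, I set $\xi_h \coloneqq \xi$ for every $h \in [H]$ in Problem~\ref{problem:as_gse}. Both SCFs then reduce to
\begin{equation*}
    1 - \prod_{h=0}^H \mbP_\pi\sqbla{c(s_h, a_h) \le \xi},
\end{equation*}
so the feasible policy sets coincide and transformability holds trivially. There is no real obstacle here; the only thing to be careful about is to state both transformations in terms of the SCFs and feasible policy spaces rather than as inequalities between quantities, so that the Transformability definition is applied verbatim. Combining the two transformations yields the desired \iomg statement, which I would conclude with a one-line summary mirroring the structure of the earlier theorems.
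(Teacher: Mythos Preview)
Your proposal is correct and follows essentially the same approach as the paper: invoke Lemma~\ref{lemma:p5to7} to handle Problem~\ref{problem:almost_surely}, then observe that Problem~\ref{problem:instantaneous_fixed} is the special case of Problem~\ref{problem:as_gse} with $\xi_h \coloneqq \xi$ for all $h$. The paper's proof is slightly terser and does not spell out the SCF equality explicitly, but the argument is the same.
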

\begin{proof}
    By Lemma~\ref{lemma:p5to7}, Problem~\ref{problem:almost_surely} can be transformed into Problem~\ref{problem:as_gse}.
    Also, Problem~\ref{problem:instantaneous_fixed} is a special case of Problem~\ref{problem:as_gse} where $\xi_h$ is a constant for all $h \in [H]$.
    Hence, Problem~\ref{problem:as_gse} is an \iomg problem over Problems~\ref{problem:almost_surely} and \ref{problem:instantaneous_fixed}.
\end{proof}
\section{Discussion}
\label{sec:discussion}

We conclude by discussing the current state and future directions of safe RL based on constrained criteria.

\subsection{Formulation and Algorithm Selection}

As mentioned earlier, when we adopt the safe RL paradigm, there are two main steps: 1) \textit{problem formulation} and 2) \textit{policy optimization}.
Based on the survey results, let us discuss how to solve safe RL problems via constraints.

\paragraph{Problem formulation.}
As presented in Section~\ref{sec:common_formulations}, constraint formulations in safe RL are divided into two classes: one based on $\mbE_\pi$ and the other based on $\mbP_\pi$.
Constraint formulations with expectation $\mbE_\pi$ (i.e., Problems~\ref{problem:expected_cumulative_safety} - \ref{problem:expected_gse}) represent the safety constraints using the expected value, which focuses on the ``averaged'' performance of safety.
On the other hand, constraint formulations based on $\mbP_\pi$ (i.e., Problems~\ref{problem:almost_surely} - \ref{problem:as_gse}) require an agent to guarantee safety almost surely.
When the problem is formulated based on $\mbP_\pi$, the achieved safety level is usually higher by nature.
Still, there is also a drawback that the reward performance is usually lower due to stricter safety constraints.
Which formulation to adopt should depend on which level of safety is faced with the problem.

\paragraph{Policy optimization.}
When optimizing a policy, we must select a proper algorithm that corresponds to the problem formulation.
Given the current situation, the easiest way is to use algorithms implemented in well-used OSS such as FSRL/OSRL~\citep{liu2023datasets} and SafePO/OmniSafe~\citep{omnisafe}.
Note that, as shown in Table~\ref{tab:formulations}, the algorithms implemented in the above OSS are mostly based on Problems~\ref{problem:expected_cumulative_safety} and \ref{problem:almost_surely}.
In the long run, however, the algorithms based on Problems~\ref{problem:expected_gse} and \ref{problem:as_gse} (i.e., \iomg problems) are also promising since instantaneous constraints are easier to handle than cumulative ones both theoretically and empirically.

\paragraph{When should your agent be safe?}
Another perspective in the algorithm selection should include \textit{when} an agent needs to satisfy the safety constraint.
Existing algorithms can be divided into two classes (see ``safety'' column in Table~\ref{tab:formulations}.
The first class (w/o \textbf{T}) tries to achieve the required level of safety \textit{after convergence} while encouraging safety during training.
The algorithms in this class are usually based on Problems~\ref{problem:expected_cumulative_safety} or \ref{problem:almost_surely} where the safety constraints are represented using the additive safety cost structure.
The second class (w/ \textbf{T}) tries to guarantee safety even \textit{during training} as well as after convergence.
The algorithms in this class are typically built upon Problems~\ref{problem:instantaneous_fixed} or \ref{problem:as_gse} where the safety constraints are instantaneous.
Which class of algorithm you should choose depends solely on the problem and application to be addressed.

\subsection{Online RL vs. Offline RL}

As shown in Table~\ref{tab:formulations}, most of the existing safe RL literature considers ``online'' RL settings where an agent interacts with the environment while learning its policy.
A major advantage of safe ``online'' RL is that policies can be trained from scratch without data collected previously.
On the other hand, ``offline'' RL \citep{levine2020offline} is a framework to train a policy from a fixed amount of pre-collected data, potentially solving the fundamental issues regarding safety or risk.
Offline RL is well-suited in the context of safe RL because the agent does not interact with the real environment during training; thus, the policy training does not essentially pose any risk.
Hence, \textit{safe offline RL} is a promising approach to achieve safety in RL if enough data or high-fidelity simulation is available.
Although most of the existing safe offline RL literature is based on Problem~\ref{problem:expected_cumulative_safety} (e.g., \cite{le2019batch}, \cite{lee2021coptidice}) as shown in Table~\ref{tab:formulations}, such research direction is expected to expand to other problem settings.

\section{Conclusion}
\label{sec:conclusion}

This paper provides a comprehensive survey of safe reinforcement learning based on constrained optimization criteria, with a particular focus on problem \textit{formulations}.
We present seven common constraint formulations and their associated representative algorithms in Section~\ref{sec:common_formulations}.
Additionally, a curated selection of relevant literature is summarized according to the problem formulation in Table~\ref{tab:formulations}.
In Section~\ref{sec:relationship}, we examine the theoretical relations among these formulations, offering readers a deeper understanding of safe RL.
Furthermore, we describe the current status of safe RL research and outline potential future directions.
Through this survey paper, we aim to foster a systematic understanding of constraint formulations and encourage further fundamental and applied research in safe reinforcement learning.

\bibliographystyle{named}
\bibliography{ref}

\end{document}